\documentclass{article}




     \usepackage[final]{neurips_2019}

\usepackage[utf8]{inputenc} 
\usepackage[T1]{fontenc}    
\usepackage{hyperref}       

\usepackage{url}            
\usepackage{booktabs}       
\usepackage{amsfonts}       
\usepackage{nicefrac}       
\usepackage{microtype}      
\usepackage{array}
\usepackage{amsthm}

\theoremstyle{theorem}
\newtheorem{proposition}{Proposition}

\title{ALPaCA vs. GP-based Prior Learning: \\A Comparison between two Bayesian Meta-Learning Algorithms}

%

\author{%
  Yilun Wu \\
  D-MAVT\\
  ETH Zürich\\
  Switzerland, ZH 8001 \\
  \texttt{wuyil@student.ethz.ch} \\
}

\begin{document}

\maketitle

\begin{abstract}
  Meta-learning or few-shot learning, has been successfully applied in a wide range of domains from computer vision to reinforcement learning. Among the many frameworks proposed for meta-learning, bayesian methods are particularly favoured when accurate and calibrated uncertainty estimate is required. In this paper, we investigate the similarities and disparities among two recently published bayesian meta-learning methods: ALPaCA (\cite{alpaca}) and PACOH (\cite{pacoh}). We provide theoretical analysis as well as empirical benchmarks across synthetic and real-world dataset. While ALPaCA holds advantage in computation time by the usage of a linear kernel, general GP-based methods provide much more flexibility and achieves better result across datasets when using a common kernel such as SE (Squared Exponential) kernel. The influence of different loss function choice is also discussed.
\end{abstract}

\section{Introduction}
Many sources such as \cite{Mnih2015HumanlevelCT} and \cite{dubey2018investigating} suggest what make humans so good at learning to solve new tasks is by efficiently extracting prior knowledge from past experience and leveraging them during decision making. 

Acquiring such generalizable inductive bias has always been a central problem in machine learning (\cite{Baxter_2000}), especially under meta-learning settings where the amount of online learning data is limited (\cite{caruana1997multitask}, \cite{vanschoren2018metalearning}). However, adapting a highly complex model based on a few training input is inherently challenging and produces a large uncertainty if not well regulated (e.g. meta-overfitting reported by \cite{mishra2017simple} and \cite{pacoh}). To improve the robustness of meta-learning methods, it is therefore a good idea to incorporate uncertainty statistics as a metric during training and evaluation in the learning process. 

Bayesian methods are very popular in this regime, since it provides meaningful and accurate statistics through well-understood update rule. Both parametric methods such as Bayesian Neural Network and Non-parametric methods such as GP (Gaussian Processes) have seen tremendous successful real-world applications such as \cite{berkenkamp2017safe}, \cite{Hewing_2019} and \cite{2017_Peretroukhin_Reducing}. The representation power of standard GP methods are furthermore extended by embedding function approximators such as deep neural networks in its kernel (\cite{wilson2015deep}) and mean function (\cite{fortuin2019}) or jointly (\cite{Calandra_2016} and \cite{garnelo2018conditional}). It is therefore natural to adopt bayesian statistics in the scope of meta-learning where bayesian priors are used as the inductive bias at the task-level. Recent adoptions in  meta-learning literatures include \cite{kim2018bayesian}, \cite{grant2018recasting}, \cite{ravi2018amortized}.

In this work, we focus on comparing two similar bayesian learning frameworks: BLR (Bayesian Linear Regression) vs. GPR (Gaussian Process Regression) as published in \cite{alpaca} and \cite{pacoh}. We show an one-to-one correspondence between BLR models and GPR models in \autoref{sec:equivalence}, in the sense that BLR and GPR models share the same hypothesis space. We later show the objective function used in \cite{alpaca}, as a result of directly minimizing the conditional (posterior) KL divergence between the true distribution and predicted distribution, under mild assumptions, is the same as maximizing the likelihood of overall dataset. In \autoref{sec:empirical}, we present empirical results on models consisting of different architectures and loss function with varying nuances. A trade-off between performance and computation still exists between kernel-based GP methods and bayesian linear regression, despite opposite claims from \cite{alpaca}.

\section{Theoratical Equivalence and Disparity between ALPaCA and GPR Meta Learning}
In this section, we point out the theoretical equivalence and differences between ALPaCA \cite{alpaca} and PACOH \cite{pacoh} in the meta learning setting. 

We first show that during inference (online update), the posterior predictive distribution of response variable conditioned on context data from BLR (Bayesian Linear Regression) in latent space is the same as the posterior predictive distribution derived from a GPR (Gaussian Process Regression) with a special linear kernel.

We then compare the loss function used for learning the parameters constituting the meta prior in two methods. It can be shown that under the assumption of uniform sampling of context time horizon, the two loss functions are equivalent to each other.

\subsection{Latent Space Bayesian Linear Regression}
\label{sec:ALPaCa-math}
We summarize the meta-learning architecture used in ALPaCA as bayesian linear regression with
transformed feature space via basis function $\phi:\mathbb{R}^{n_x}\rightarrow\mathbb{R}^{n_\phi}$. This is usually implemented with a deep neural network with $n_\phi$ hidden layers in the last layer, as in \cite{alpaca}. The dependent variable y is linearly regressed as: $y^\top=\phi(x)^\top\mathbf{K}+\epsilon$, where $\epsilon\sim\mathcal{N}(0, \mathbf{\Sigma_\epsilon}), \mathbf{K}\in\mathbb{R}^{n_\phi\times n_y}$.
Note, in this case, both $x, \phi(x)$ and $y$ are multi-dimensional vectors with dimension $n_x, n_\phi, n_y$ respectively.

Assume a prior for $\mathbf{K}\sim \mathcal{MN}(\mathbf{K_0}, \Lambda_0^{-1}, \Sigma_\epsilon )$ where $\mathcal{MN}$ indicates matrix normal distribution.\footnote{The definition of the matrix normal distribution can be found at \url{https://en.wikipedia.org/wiki/Matrix_normal_distribution}.} With the likelihood (conditional probability) of data $Y$ being $p(Y|\Phi, \mathbf{K}, \Sigma_\epsilon)\propto \abs{\Sigma_\epsilon}^{-n_y/2} \exp(-\frac{1}{2}\tr((Y-\Phi\mathbf{K})\Sigma_\epsilon^{-1}(Y-\Phi\mathbf{K})^\top))$, where $\Phi=[\phi(x_1), \phi(x_2), ...]$, we arrive at the posterior distribution of $\mathbf{K}$: $p(\mathbf{K}|Y,X)=\mathcal{MN}(\mathbf{K_\tau}, \Lambda_\tau^{-1}, \Sigma_\epsilon)$, which is of the same distribution family as the prior, making matrix normal distribution the conjugate prior for the likelihood function above.

Plugging the posterior distribution of $\mathbf{K}$ into the likelihood function, we obtain posterior predictive distribution of $Y_t$ at test points $X_t$ conditioned on observed context data $(X_c, Y_c)$:

\begin{equation}
p(Y_t|X_t, X_c, Y_c)=\mathcal{N}(\Phi(X_t)^\top\mathbf{K_\tau}, \Sigma_\tau)
\label{eqn:alpaca-post}
\end{equation}
where
\begin{align}
\label{eqn:BLR-inv}
\mathbf{K_\tau}&=\Lambda_\tau^{-1}(\Phi(X_c) Y_c + \Lambda_0\mathbf{K_0}) \\
\label{eqn:lambda-tau}
\Lambda_\tau &=\Phi(X_c) {\Phi(X_c)}^\top+\Lambda_0 \\
\Sigma_{\tau}&=(\mathbf{I}+\Phi(X_t)^\top \Lambda_\tau^{-1} \Phi(X_t))\otimes \Sigma_\epsilon
\end{align}
where $\otimes$ denotes the Kronecker product.\footnote{The definition of Kronecker product can be found at \url{https://en.wikipedia.org/wiki/Kronecker_product}.}
\subsection{Gaussian Process Regression with Linear Kernel}
Now consider multi-output (multi-task) Gaussian Process Regression $f: X\in\mathbb{R}^{n_x}\rightarrow Y\in\mathbb{R}^{n_y} \sim \mathcal{GP}(m(\cdot), k(\cdot, \cdot))$ where $m(\cdot): X\in\mathbb{R}^{n_x}\rightarrow Y\in\mathbb{R}^{n_y}$ is the mean function and $k(\cdot, \cdot): X \times X\in\mathbb{R}^{n_x} \times \mathbb{R}^{n_x} \rightarrow \Sigma \in\mathbb{R}^{n_y \times n_y}$ is the kernel function. Note in this case, the output of $f$ is multi-dimensional, thus the output of the kernel function is a matrix instead of a scalar value. The matrix output of the kernel function can be interpreted as the cross-covariance matrix at two different intputs under their prior distribution: $k(x, x')=\mathbb{E}[(f(x)-\overline{f(x)})(f(x')-\overline{f(x')})^\top]$.

Let $m(x)=\mathbf{K_0}^\top \phi(x)$ and $k(x, x')=\phi(x)^\top \Lambda_0^{-1} \phi(x') \Sigma_\epsilon$ with the same $\mathbf{K_0}, \Lambda_0, \Sigma_\epsilon$ used in defining the prior distribution of $\mathbf{K}$ in \ref{sec:ALPaCa-math}. One can construct the gram matrix of a set of points $X=[x_1, x_2, ..., x_n]^\top$ by stacking pairs of kernel functions:
\begin{align}
	\label{eq:gram-matrix}
	G(X)&=
	\begin{bmatrix}
		k(x_1, x_1) & k(x_1, x_2) & \dots & k(x_1, x_n) \\
		k(x_2, x_1) & k(x_2, x_2) & \dots & k(x_2, x_n)\\
		\vdots & \vdots & \ddots & \vdots \\
		k(x_n, x_1) & k(x_n, x_2) & \dots & k(x_n, x_n)
	\end{bmatrix} \\
	&=\Phi(X)^\top \Lambda_0^{-1}\Phi(X) \otimes \Sigma_\epsilon
\end{align}

Further let $y(x)=f(x)+\epsilon$ where $\epsilon\sim\mathcal{N}(0, \Sigma_\epsilon)$. Then the prior distribution for $Y(X)=[y(x_1), y(x_2), ..., y(x_n)]^\top$ reads as follows:
\begin{equation}
	\label{eqn:gp-likelihood}
	Y\sim \mathcal{N}(m(X), (G(X)+\mathbf{I})\otimes \Sigma_\epsilon)
\end{equation}

From this we can write the following prior distribution for target values at context points $X_c$ and test points $X_t$:
\begin{equation}
	\begin{bmatrix}
		Y(X_c)\\
		Y(X_t)
	\end{bmatrix} \sim \mathcal{N}\left(
	\begin{bmatrix}
		\mu_1 \\ \mu_2
	\end{bmatrix}
,	\begin{bmatrix}
		\Sigma_{11} & \Sigma_{12} \\
		\Sigma_{21} & \Sigma_{22}
	\end{bmatrix}
\right)
\end{equation} 
where 
\begin{align}
	\mu_1= \Phi(X_c)^\top\mathbf{K_0},\;& \mu_2=\Phi(X_t)^\top \mathbf{K_0} \\
	\Sigma_{11}=(\Phi(X_c)^\top\Lambda_0^{-1}\Phi(X_c)+\mathbf{I})\otimes \Sigma_\epsilon , \; & \Sigma_{22}=(\Phi(X_t)^\top\Lambda_0^{-1}\Phi(X_t)+\mathbf{I})\otimes \Sigma_\epsilon \\
	\Sigma_{12} = \Sigma_{21}^\top=&(\Phi(X_c)^\top\Lambda_0^{-1}\Phi(X_t))\otimes \Sigma_\epsilon
\end{align}
By applying marginalization rule, we arrive at the following posterior distribution for $Y_t$ at test points $X_t$:
\begin{align}
	\label{eqn:gp-posterior}
	&p(Y_t|X_t, X_c, Y_c=z)= \mathcal{N}(\mu_2+\Sigma_{21}\Sigma_{11}^{-1}(z-\mu_1), \Sigma_{22}-\Sigma_{21} \Sigma_{11}^{-1} \Sigma_{12}) \\
	=&\mathcal{N}\{\Phi(X_t)^\top \mathbf{K_0}+\Phi(X_t)^\top \Lambda_0^{-1}\Phi(X_c)(\Phi(X_c)^\top\Lambda_0^{-1}\Phi(X_c)+\mathbf{I})^{-1}(Y_c-\Phi(X_c)^\top \mathbf{K_0}), \nonumber\\
	&[\Phi(X_t)^\top\Lambda_0^{-1}\Phi(X_t)+\mathbf{I}-\Phi(X_t)^\top\Lambda_0^{-1}\Phi(X_c)(\Phi(X_c)^\top\Lambda_0^{-1}\Phi(X_c)+\mathbf{I})^{-1}(\Phi(X_c)^\top\Lambda_0^{-1}\Phi(X_t))]\nonumber \\
	&\otimes \Sigma_\epsilon \}
	\label{eqn:gpr-post}
\end{align}

\subsection{Equivalence between GPR with Linear Kernel and Latent Space BLR}
\label{sec:equivalence}
\begin{proposition}
\label{prop:arch-equivalence}
	The posterior predictive distribution of BLR (\autoref{eqn:alpaca-post}) and GPR (\autoref{eqn:gpr-post}) evaluates to the same distribution. 
\end{proposition}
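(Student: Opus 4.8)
Both \autoref{eqn:alpaca-post} and \autoref{eqn:gpr-post} are (matrix-variate) normal distributions, so the plan is simply to show that their mean parameters agree and that their covariance parameters agree. The whole argument rests on one algebraic fact, the Woodbury matrix identity applied to $\Lambda_\tau=\Phi(X_c)\Phi(X_c)^\top+\Lambda_0$ (see \autoref{eqn:lambda-tau}),
\begin{equation}
\Lambda_\tau^{-1}=\Lambda_0^{-1}-\Lambda_0^{-1}\Phi(X_c)\left(\mathbf{I}+\Phi(X_c)^\top\Lambda_0^{-1}\Phi(X_c)\right)^{-1}\Phi(X_c)^\top\Lambda_0^{-1},
\end{equation}
together with its immediate ``push-through'' corollary $\Lambda_\tau^{-1}\Phi(X_c)=\Lambda_0^{-1}\Phi(X_c)\left(\mathbf{I}+\Phi(X_c)^\top\Lambda_0^{-1}\Phi(X_c)\right)^{-1}$. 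Both need only that $\Lambda_0$ is invertible and that $\mathbf{I}+\Phi(X_c)^\top\Lambda_0^{-1}\Phi(X_c)$ is invertible, the latter being automatic since that matrix is positive definite whenever $\Lambda_0$ is.

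For the covariance, I would substitute the Woodbury expression for $\Lambda_\tau^{-1}$ into $\Sigma_\tau=(\mathbf{I}+\Phi(X_t)^\top\Lambda_\tau^{-1}\Phi(X_t))\otimes\Sigma_\epsilon$ from \autoref{eqn:alpaca-post}. Since the Kronecker factor $\Sigma_\epsilon$ is common to both expressions, it suffices to match the first Kronecker factors, and expanding $\mathbf{I}+\Phi(X_t)^\top\Lambda_\tau^{-1}\Phi(X_t)$ term by term reproduces exactly the bracketed matrix $\Phi(X_t)^\top\Lambda_0^{-1}\Phi(X_t)+\mathbf{I}-\Phi(X_t)^\top\Lambda_0^{-1}\Phi(X_c)(\Phi(X_c)^\top\Lambda_0^{-1}\Phi(X_c)+\mathbf{I})^{-1}\Phi(X_c)^\top\Lambda_0^{-1}\Phi(X_t)$ appearing in \autoref{eqn:gpr-post}.

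For the mean, I would split $\mathbf{K_\tau}=\Lambda_\tau^{-1}\Lambda_0\mathbf{K_0}+\Lambda_\tau^{-1}\Phi(X_c)Y_c$ using \autoref{eqn:BLR-inv}. Applying Woodbury to $\Lambda_\tau^{-1}\Lambda_0$ gives $\mathbf{I}-\Lambda_0^{-1}\Phi(X_c)(\mathbf{I}+\Phi(X_c)^\top\Lambda_0^{-1}\Phi(X_c))^{-1}\Phi(X_c)^\top$, while the push-through corollary rewrites $\Lambda_\tau^{-1}\Phi(X_c)Y_c$ as $\Lambda_0^{-1}\Phi(X_c)(\mathbf{I}+\Phi(X_c)^\top\Lambda_0^{-1}\Phi(X_c))^{-1}Y_c$. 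Collecting the two contributions, the shared factor $(\mathbf{I}+\Phi(X_c)^\top\Lambda_0^{-1}\Phi(X_c))^{-1}$ lets them fuse into $\mathbf{K_0}+\Lambda_0^{-1}\Phi(X_c)(\mathbf{I}+\Phi(X_c)^\top\Lambda_0^{-1}\Phi(X_c))^{-1}(Y_c-\Phi(X_c)^\top\mathbf{K_0})$; left-multiplying by $\Phi(X_t)^\top$ yields precisely the posterior mean in \autoref{eqn:gpr-post}.

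The only genuinely delicate point I anticipate is the Kronecker-product bookkeeping behind \autoref{eqn:gp-posterior}: one has to apply $(A\otimes B)(C\otimes D)=(AC)\otimes(BD)$ and $(A\otimes B)^{-1}=A^{-1}\otimes B^{-1}$ consistently, so that the $\Sigma_\epsilon$ factors inside $\Sigma_{21}\Sigma_{11}^{-1}$ cancel and leave the correct residual acting on $Y_c-\mu_1$, and to check that the matrix/vectorized identifications of $Y_c$ used on the two sides match. Once that is pinned down, everything collapses to the two displayed identities and the remaining manipulations are routine linear algebra.
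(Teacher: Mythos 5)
Your proposal is correct and follows essentially the same route as the paper's proof in the appendix: both arguments reduce the mean and covariance comparison to the Woodbury identity for $\Lambda_\tau^{-1}$ together with the push-through corollary $\Lambda_\tau^{-1}\Phi(X_c)=\Lambda_0^{-1}\Phi(X_c)(\mathbf{I}+\Phi(X_c)^\top\Lambda_0^{-1}\Phi(X_c))^{-1}$, differing only in that you expand from the BLR side while the paper simplifies from the GPR side. Your flagged concern about the Kronecker-product bookkeeping is well placed but does not change the substance of the argument.
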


\begin{proof}
	By invoking the Woodbury identity, we show the detailed proof in \autoref{sec:proof}.
\end{proof}

Note although they both lead to the same posterior distribution, BLR has significant computational advantage over GPR during inference when the number of conditioned context data is large, as the dimension of matrix which needs to be inversed in \autoref{eqn:gp-posterior} is $n_x \times n_x$ while for BLR in \autoref{eqn:BLR-inv} is $n_\phi \times n_\phi$, irrespective of the number of context points.

It is also worth mentioning the redundant parametrization of $\mathbf{K_0}$ and $\Lambda_0$ in Latent BLR:

\begin{proposition}
	Latent BLR defined by prior parameters $\mathbf{K_0}, \Lambda_0$ and $\phi(x)$ leads to the same posterior predictive distribution (\autoref{eqn:alpaca-post}) as Latent BLR defined by $\mathbf{K_0^\prime}, \Lambda_0^\prime$, and $\phi'(x)$ given by
	\begin{align}
		\mathbf{K_0^\prime}&=\mathbf{L}^{\prime\top}\mathbf{K_0} \\
		\Lambda_0^{\prime -1}&=\mathbf{L}^{\prime \top} \Lambda_0^{-1} \mathbf{L'} \\
		\phi^{\prime}(x)&=\mathbf{L}^{\prime -1}\phi(x)
	\end{align}
	where $\mathbf{L'}$ is any invertible matrix.
\end{proposition}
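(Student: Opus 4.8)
The reason the claim holds is that the primed model is just the original one rewritten in the coordinates $\mathbf{K'}=\mathbf{L'}^{\top}\mathbf{K}$. Indeed $\phi'(x)^{\top}\mathbf{K'}=\phi(x)^{\top}\mathbf{L'}^{-\top}\mathbf{L'}^{\top}\mathbf{K}=\phi(x)^{\top}\mathbf{K}$, so the regression function and the observation noise $\epsilon$ are left untouched by this change of variables, while the prior $\mathbf{K}\sim\mathcal{MN}(\mathbf{K_0},\Lambda_0^{-1},\Sigma_\epsilon)$ is pushed forward to $\mathbf{K'}\sim\mathcal{MN}(\mathbf{L'}^{\top}\mathbf{K_0},\,\mathbf{L'}^{\top}\Lambda_0^{-1}\mathbf{L'},\,\Sigma_\epsilon)=\mathcal{MN}(\mathbf{K_0'},\,(\Lambda_0')^{-1},\,\Sigma_\epsilon)$. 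A bijective reparametrization of the latent weights cannot alter the predictive law of $Y_t$, so the two posteriors must coincide. The plan is to avoid leaning on this meta-argument and instead verify the identity by a short direct computation on the closed forms \autoref{eqn:alpaca-post}--\autoref{eqn:lambda-tau}.

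First I would record the induced transformations: applying $\phi'(x)=\mathbf{L'}^{-1}\phi(x)$ columnwise gives $\Phi'(X)=\mathbf{L'}^{-1}\Phi(X)$ for every input set, and $(\Lambda_0')^{-1}=\mathbf{L'}^{\top}\Lambda_0^{-1}\mathbf{L'}$ is equivalent to $\Lambda_0'=\mathbf{L'}^{-1}\Lambda_0\mathbf{L'}^{-\top}$, which is still symmetric positive definite, so the primed prior is well defined. Substituting into \autoref{eqn:lambda-tau},
\begin{equation}
\Lambda_\tau'=\Phi'(X_c)\Phi'(X_c)^{\top}+\Lambda_0'=\mathbf{L'}^{-1}\big(\Phi(X_c)\Phi(X_c)^{\top}+\Lambda_0\big)\mathbf{L'}^{-\top}=\mathbf{L'}^{-1}\Lambda_\tau\mathbf{L'}^{-\top},
\end{equation}
so $\Lambda_\tau$ undergoes the same congruence as $\Lambda_0$ and $(\Lambda_\tau')^{-1}=\mathbf{L'}^{\top}\Lambda_\tau^{-1}\mathbf{L'}$. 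Then, using $\Lambda_0'\mathbf{K_0'}=\mathbf{L'}^{-1}\Lambda_0\mathbf{L'}^{-\top}\mathbf{L'}^{\top}\mathbf{K_0}=\mathbf{L'}^{-1}\Lambda_0\mathbf{K_0}$ and $\Phi'(X_c)Y_c=\mathbf{L'}^{-1}\Phi(X_c)Y_c$, \autoref{eqn:BLR-inv} yields
\begin{equation}
\mathbf{K_\tau'}=(\Lambda_\tau')^{-1}\big(\Phi'(X_c)Y_c+\Lambda_0'\mathbf{K_0'}\big)=\mathbf{L'}^{\top}\Lambda_\tau^{-1}\mathbf{L'}\,\mathbf{L'}^{-1}\big(\Phi(X_c)Y_c+\Lambda_0\mathbf{K_0}\big)=\mathbf{L'}^{\top}\mathbf{K_\tau}.
\end{equation}

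It then remains to substitute these into the two parameters of \autoref{eqn:alpaca-post}. The predictive mean becomes $\Phi'(X_t)^{\top}\mathbf{K_\tau'}=\Phi(X_t)^{\top}\mathbf{L'}^{-\top}\mathbf{L'}^{\top}\mathbf{K_\tau}=\Phi(X_t)^{\top}\mathbf{K_\tau}$, and the quadratic form inside the covariance becomes $\Phi'(X_t)^{\top}(\Lambda_\tau')^{-1}\Phi'(X_t)=\Phi(X_t)^{\top}\mathbf{L'}^{-\top}\mathbf{L'}^{\top}\Lambda_\tau^{-1}\mathbf{L'}\mathbf{L'}^{-1}\Phi(X_t)=\Phi(X_t)^{\top}\Lambda_\tau^{-1}\Phi(X_t)$, whence $\Sigma_\tau'=\Sigma_\tau$. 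Both parameters of \autoref{eqn:alpaca-post} are thus unchanged, which is the claim.

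There is no genuinely hard step; the entire content is bookkeeping of transposes and inverses. The only points deserving care are the orientation convention for $\Phi(X)$ (data points index the columns, so left-multiplication by $\mathbf{L'}^{-1}$ acts on the $n_\phi$-dimensional feature axis) and the observation that $\Phi$ enters the posterior only through the combination $\Phi^{\top}\Lambda_\tau^{-1}\Phi$, invariant under the congruence $\Lambda_\tau\mapsto\mathbf{L'}^{-1}\Lambda_\tau\mathbf{L'}^{-\top}$, and through $\Lambda_\tau^{-1}\Phi$, whose stray $\mathbf{L'}$ factors cancel against the compensating $\mathbf{L'}^{\top}$ carried in by $\mathbf{K_0'}$. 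Once this cancellation pattern is noticed the verification is immediate; alternatively, one can skip it and simply invoke the change-of-variables argument above.
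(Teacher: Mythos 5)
Your proof is correct and the direct computation you carry out — deriving $\Lambda_\tau' = \mathbf{L'}^{-1}\Lambda_\tau\mathbf{L'}^{-\top}$, $\mathbf{K_\tau'} = \mathbf{L'}^{\top}\mathbf{K_\tau}$, and then checking that the mean and covariance of \autoref{eqn:alpaca-post} are unchanged — is essentially identical to the paper's proof in \autoref{sec:proof-blr}. The change-of-variables observation you open with (that the primed model is the pushforward of the original under the bijection $\mathbf{K}\mapsto\mathbf{L'}^{\top}\mathbf{K}$, which leaves $\phi(x)^{\top}\mathbf{K}$ invariant) is a nice conceptual addition the paper does not make explicit, but since you ultimately rely on the same closed-form verification, the two arguments coincide.
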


\begin{proof}
	See \autoref{sec:proof-blr}.	
\end{proof}

Therefore,  $\mathbf{K_0}$ and $\Lambda_0$ are redundant in that one of the parameter could be fixed during training without an impact on the overall loss. Fixing $\Lambda_0$ during training has additionally led to better loss landscape and avoided numerical issues especially when used with a weight decay optimizer, which will be discussed in later sections.

Due to this equivalence, we can take $\mathbf{L'}=\mathbf{L}^{-1}$ which results in a $\Lambda_0^\prime=\Lambda_0^{\prime -1}=\mathbf{I}$, rendering its equivalent GPR kernel a standard linear kernel $k'(x, x')=\phi'(x)^\top \phi'(x)$ instead of a weighted one. 

\subsection{Extended Equivalence to Mercel Kernel GPRs}
\label{sec:kernel-equivalence}
It is a classical result in machine learning that Bayesian Linear Regression can be regarded as a special case of Gaussian Process Regression with which linear kernel adopted. We have shown in \autoref{sec:equivalence} the two methods yield the same posterior distribution even for multi-variate case. In this section, we extend the argument of equivalence to all GPR models with any Mercer kernels.

Mercer kernels is the family of kernels whose gram matrix for any set of inputs (\autoref{eq:gram-matrix}) is always symmetric positive definite. Since all Mercer kernels can be written as the inner product in feature space: $k(x, x')=\phi(x)^\top \phi(x')$ \cite{murphy2013machine}, we can show any GPR with a Mercer kernel and a linear mean function: $m(x)=\mathbf{K}^\top \phi(x)$ is equivalent to a BLR in the latent space defined by the feature transformation: $\phi(x)$ with $\mathbf{K_0}=\mathbf{K}$, and $\Lambda_0=\mathbf{I}$.

Although not all kernels belong to the Mercer kernel family (such as the sigmoid kernel defined as $k(x, x')=\text{tanh}(\gamma x^\top x'+r)$), many of the commonly used ones such as Gaussian kernel (RBF Kernel), linear kernel, Matern kernel are Mercer (\cite{learnwithkernels}). One caveat is that some kernels such as Gaussian kernel could have an infinite-dimensional corresponding feature vector $\phi(x)$ although its representation power could be approximated by a kernel with a large enough finite-dimensional feature vector. We show the empirical results in later sections.

\subsection{Loss Function Equivalence}
\label{sec:loss-equivalence}
Apart from the architectural difference discussed above, the other disparity between ALPaCA and GPR Meta Learning outlined in \cite{alpaca} and \cite{pacoh} is the different cost functions which are used to optimize for prior parameters. ALPaCA assumes new data flow in a streaming fashion in that we observe one pair $(x_t, y_t)$ before we observe the next pair $(x_{t+1}, y_{t+1})$. In this section, we show that considering this "per time-step" loss is equivalent to maximizing the overall likelihood of the meta dataset when assuming a uniform time distribution even under non-i.i.d. assumption of the underlying data.

\begin{proposition}
	\label{prop:loss-equvilance}
	For all probabilistic inference algorithms, the minimization of negative expected posterior likelihood (\autoref{eqn:alpaca-cost} used by ALPaCA):
	\begin{align}
	\label{eqn:alpaca-cost}
	\min_{\xi} \mathbb{E}_{t\sim p(t), \theta^*\sim p(\theta)}[\mathbb{E}_{x_{t+1}, y_{t+1}, D_t^* \sim p(x_{t+1}, y_{t+1}, D_t^* | \theta^*) } \log q_\xi(y_{t+1}|x_{t+1}, D_t^*)]
\end{align}
	 and expected negative prior likelihood over the entire horizon (\autoref{eqn:mll-cost} used by PACOH-MAP):
	\begin{align}
	\label{eqn:mll-cost}
	\min_\xi \mathbb{E}_{\theta^*\sim p(\theta)}[\mathbb{E}_{D_T^*\sim p(D^*|\theta^*)} \log l_\xi(D_T^*)]
	\end{align}
	  is equivalent, assuming a uniform distribution on the context data horizon $t$.
\end{proposition}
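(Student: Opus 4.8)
The plan is to exploit the prequential (chain-rule) factorization of the marginal likelihood $l_\xi$ and then recognise the resulting sum as a uniform average over the time index $t$.

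First I would observe that, because $l_\xi$ is the marginal likelihood of a genuine probabilistic model — this is exactly what the phrase ``for all probabilistic inference algorithms'' is meant to guarantee — the one-step posterior predictive densities $q_\xi(y_{t+1}\mid x_{t+1}, D_t^*)$ are precisely the successive conditionals of the joint law that $l_\xi$ assigns to $D_T^* = \{(x_1,y_1),\dots,(x_T,y_T)\}$. Hence, with the convention $D_0^* = \emptyset$, the chain rule gives for every realisation of $D_T^*$
\[
\log l_\xi(D_T^*) = \sum_{t=0}^{T-1} \log q_\xi(y_{t+1}\mid x_{t+1}, D_t^*).
\]
Crucially this identity needs no independence assumption on the pairs $(x_i,y_i)$: it is merely the definition of conditional density applied to whatever joint distribution $p(\cdot\mid\theta^*)$ the task parameter $\theta^*$ induces, which is why the equivalence survives in the non-i.i.d.\ setting.

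Next I would take the expectation over $D_T^*\sim p(D^*\mid\theta^*)$ on both sides and pass it through the finite sum by linearity. For each summand the integrand depends only on the prefix variables $(x_{t+1}, y_{t+1}, D_t^*)$, so integrating out the remaining observations leaves
\[
\mathbb{E}_{D_T^*\sim p(D^*\mid\theta^*)}\!\left[\log q_\xi(y_{t+1}\mid x_{t+1}, D_t^*)\right] = \mathbb{E}_{x_{t+1}, y_{t+1}, D_t^* \sim p(x_{t+1}, y_{t+1}, D_t^*\mid\theta^*)}\!\left[\log q_\xi(y_{t+1}\mid x_{t+1}, D_t^*)\right],
\]
which is exactly the inner expectation that appears in \autoref{eqn:alpaca-cost}. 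Taking the outer expectation over $\theta^*\sim p(\theta)$ and specialising $p(t)$ to the uniform law on $\{0,1,\dots,T-1\}$, the ALPaCA objective of \autoref{eqn:alpaca-cost} becomes $\frac{1}{T}\sum_{t=0}^{T-1}\mathbb{E}_{\theta^*}\mathbb{E}_{x_{t+1},y_{t+1},D_t^*}[\log q_\xi]$, i.e.\ exactly $\frac{1}{T}$ times the PACOH-MAP objective of \autoref{eqn:mll-cost}. Since the two objectives differ only by the positive multiplicative constant $1/T$, which does not depend on $\xi$, their minimisers coincide and the proposition follows.

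I expect the main obstacle to be the first step rather than the bookkeeping: one must make precise what a ``probabilistic inference algorithm'' is, namely that the family $\{q_\xi(\cdot\mid\cdot, D_t^*)\}_{t}$ is \emph{coherent} — the conditionals of a single common joint $l_\xi$ — since the chain-rule identity breaks for ad hoc predictors not derived in this way. A secondary point to handle carefully is the length convention: if the per-task horizon $T$ is itself random or varies between tasks, the constant $1/T$ should be read inside the corresponding conditional expectation over $\theta^*$, but the argument then applies verbatim task by task, and the minimiser is still unchanged provided the uniform distribution on $t$ is taken relative to that task's horizon.
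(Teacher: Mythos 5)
Your proposal is correct and follows essentially the same route as the paper's proof: both rest on the prequential chain-rule identity $\log l_\xi(D_T^*)=\sum_{t=0}^{T-1}\log q_\xi(y_{t+1}\mid x_{t+1},D_t^*)$ together with marginalizing out the suffix variables and reading the uniform $p(t)$ as the $1/T$ average. You simply run the argument in the opposite direction (factorizing $l_\xi$ rather than assembling the $q_\xi$ terms into $l_\xi$ by explicit integration and induction), and your explicit coherence requirement on $q_\xi$ is the same fact the paper uses when it invokes $q_\xi(y_1\mid x_1,x_2)=q_\xi(y_1\mid x_1)$.
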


\begin{proof}
ALPaCA optimizes for the expected posterior likelihood as follows:
\begin{align}
	\min_{\xi=(\mathbf{K_0}, \Lambda_0, \omega)} \mathbb{E}_{t\sim p(t), \theta^*\sim p(\theta)}[\mathbb{E}_{x_{t+1}, y_{t+1}, D_t^* \sim p(x_{t+1}, y_{t+1}, D_t^* | \theta^*) } \log q_\xi(y_{t+1}|x_{t+1}, D_t^*)]
\end{align}
where $q_\xi$ refers to the conditional probability distribution defined in \autoref{eqn:alpaca-post}.

GPR MLL (Maximum Likelihood) directly optimizes for the likelihood of data over a horizon of time $T$ across tasks without explicitly considering the distribution over time step:
\begin{align}
	\min_\xi \mathbb{E}_{\theta^*\sim p(\theta)}[\mathbb{E}_{D_T^*\sim p(D^*|\theta^*)} \log l_\xi(D_T^*)]
\end{align}
where $l_\xi$ refers to the likelihood probability distribution defined in \autoref{eqn:gp-likelihood}.

Assuming uniform distribution on $p(t)$ for \autoref{eqn:alpaca-cost} and a time horizon of $T$, it becomes:
\begin{alignat}{2}
	&\min_\xi \mathbb{E}_{\theta^*\sim p(\theta)}\frac{1}{T}&&\sum_{t=0}^{T-1} \mathbb{E}_{x_{t+1}, y_{t+1}, D_t^* \sim p(x, y, D_t^* | \theta^*) } \log q_\xi(y_{t+1}|x_{t+1}, D_t^*) \\
	\label{eqn:cost-expand}
	=&\frac{1}{T} \min_\xi \mathbb{E}_{\theta^*\sim p(\theta)} [&&\mathbb{E}_{x_{1}, y_{1} \sim p(x, y| \theta^*) } \log q_\xi(y_{1}|x_{1}) +  \nonumber \\ 
	& &&\mathbb{E}_{x_{2}, y_{2}, x_{1}, y_{1} \sim p(x_2, y_2, D_1^*=(x_1, y_1)| \theta^*) } \log q_\xi(y_{2}|x_{2}, x_{1}, y_{1}) + \nonumber \\
	& &&\mathbb{E}_{x_{3}, y{3}, x_{2}, y_{2}, x_{1}, y_{1} \sim p(x_3, y_3, D_2^*=(x_1, y_1, x_2, y_2)| \theta^*) } \log q_\xi(y_{3}|x_{3}, x_{1}, y_{1}, x_{2}, y_{2}) + \nonumber \\
	& && \dots \dots ]
\end{alignat}
Note that the first term in \autoref{eqn:cost-expand}: $\mathbb{E}_{x_{1}, y_{1} \sim p(x, y| \theta^*) } \log q_\xi(y_{1}|x_{1})=\mathbb{E}_{D_1^*\sim p(D^*|\theta^*)}\log l_\xi(D_1^*)$ since the posterior distribution $q_\xi$ has not been conditioned on any context data.

We expand the first two terms in \autoref{eqn:cost-expand} (note $p(x,y|\theta^*)=p(x|\theta^*)p(y|x, \theta^*)$, and purely for notational brevity, we drop the conditional dependence of $x, y$ on $\theta^*$ below):
\begin{align}
	&\mathbb{E}_{x_{1}, y_{1} \sim p(x, y| \theta^*) } \log q_\xi(y_{1}|x_{1}) + \underbrace{\mathbb{E}_{x_{2}, y_{2}, x_{1}, y_{1} \sim p(x_2, y_2, D_1^*=(x_1, y_1)| \theta^*) } \log q_\xi(y_{2}|x_{2}, x_{1}, y_{1})}_{\mathbb{E}_2} \nonumber \\
	 =&\smallint_{y_1}\smallint_{x_1} \log q_\xi(y_1|x_1) p(x_1)p(y_1|x_1)dx_1 dy_1 + \mathbb{E}_2 \nonumber \\
	 =&\smallint_{y_1}\smallint_{x_1} \log q_\xi(y_1|x_1) p(x_1)p(y_1|x_1) \underbrace{\Big(\smallint_{y_2}\smallint_{x_2}p(x_2|x_1)p(y_2|x_2)dx_2 dy_2 \Big)}_{=1}dx_1 dy_1 + \mathbb{E}_2 \nonumber \\
	 =&\smallint_{y_1}\smallint_{x_1} \smallint_{y_2}\smallint_{x_2}\log q_\xi(y_1|x_1) p(x_1)p(y_1|x_1) p(x_2|x_1)p(y_2|x_2)dx_2 dy_2 dx_1 dy_1 + \mathbb{E}_2 \nonumber \\
	 =&\smallint_{x_2}\smallint_{y_2} \smallint_{x_1}\smallint_{y_1}\log \underbrace{q_\xi(y_1|x_1, x_2)}_{=q_\xi(y_1|x_1)} p(x_1)p(y_1|x_1) p(x_2|x_1)p(y_2|x_2)dy_1 dx_1 dy_2 dx_2 + \nonumber \\
	  &\smallint_{x_2}\smallint_{y_2} \smallint_{x_1}\smallint_{y_1}\log q_\xi(y_2|x_2, x_1, y_1)p(x_1)p(y_1|x_1)p(x_2|x_1)p(y_2|x_2)dy_1 dx_1 dy_2 dx_2 \nonumber \\
	 =&\smallint_{x_2}\smallint_{y_2} \smallint_{x_1}\smallint_{y_1}\log \big(q_\xi(y_1|x_1, x_2) q_\xi(y_2|x_2, x_1, y_1) \big)p(x_1, x_2)p(y_1|x_1)p(y_2|x_2)dy_1 dx_1 dy_2 dx_2 \nonumber \\
	 =&\smallint_{x_2}\smallint_{y_2} \smallint_{x_1}\smallint_{y_1}\log \big(q_\xi(y_1, y_2|x_1, x_2) \big)p(x_1, x_2)p(y_1, y_2|x_1, x_2)dy_1 dx_1 dy_2 dx_2 \nonumber \\
	 =&\smallint_{D_2^*}\log l_\xi(D_2^*) p(D_2^*|\theta^*)dD_2^*=\mathbb{E}_{D_2^*\sim p(D^*|\theta^*)}\log l_\xi(D_2^*)
\end{align}
Similarly by induction, we con conclude \autoref{eqn:cost-expand} evaluates to 
\begin{equation}
	\frac{1}{T} \min_\xi\mathbb{E}_{\theta^*\sim p(\theta)}[\mathbb{E}_{D_T^*\sim p(D^*|\theta^*)} \log l_\xi(D_T^*)]
\end{equation}
which is equivalent to the loss function (\autoref{eqn:mll-cost}) from GPR maximum likelihood method.
\end{proof}
\newpage
	
\section{Empirical Studies}
\label{sec:empirical}
In this section, we evaluate the performance of the above-mentioned Bayesian meta-learning frameworks with varying architectural and loss function differences.

\subsection{Environment Setup}
The source code used in the evaluations are based on implementations from \cite{alpaca} \footnote{ALPaCA:  \url{https://github.com/StanfordASL/ALPaCA}} and \cite{pacoh} \footnote{PACOH: \url{https://github.com/jonasrothfuss/meta_learning_pacoh}}.

Two synthetic dataset (Sinusoid, Cauchy) and one real-world dataset (Swissfel) are used for this benchmark. For sinusoid dataset, we generate training and test data from a family of sinusoid functions with added Gaussian noise: $y(x)=kx+A\sin(\omega x+b)+c+\epsilon$ with varying $k, A, \omega, b, c, \epsilon$ with their distribution summarized in \autoref{table:sinusoid-hyper}. Cauchy dataset is generated via the superposition of a common mean function (mixture of two Gaussians) across all tasks with GP prior functions sampled with an SE kernel. This synthetic dataset is of particular interest for evaluating the advantage of having a separate mean function estimator over sharing the same representation for both mean and kernel modules. Task data in Swissfel dataset are collected from multiple calibration sessions of Swiss Free Electron Laser (SwissFEL) (\cite{swissfel} \cite{swissfel-bayesian}). This dataset poses additional challenges such as potential overfitting due to the small amount of training tasks and samples available.

In this meta-learning setting, data from multiple tasks are included in the training dataset while test dataset consists of additional data from other unseen tasks. All data in training sets are used for optimizing bayesian prior hyper-parameters. Data in test sets are split into context data which are used to perform online update in BLR or GPR and test data which are evaluated for performance. To ensure a fair comparison between experiments, methods are trained and evaluated on the same training set and test set. \autoref{table:dataset-hyper} summarizes the dataset configurations.

For each experiment, AdamW (\cite{adamW}) optimizer are chosen over Adam (\cite{adam}) to avoid overfitting during training. The parameters of the optimizer such as learning rate and weight decay are tuned to achieve convergence in both training loss and validation loss. Lastly, we selected the best parameters based on performance in validation set (maximum log likelihood w.r.t posterior predictive distribution).

We evaluate the methods on three metrics: log likelihood of posterior predictive distribution, RMSE (root mean square error) and calibration error \cite{calibration} on test set. While RMSE captures the accuracy of mean prediction, calibration error reflects the accuracy in uncertainty estimates, namely the RMSE between empirical frequency and predicted frequency across different confidence intervals around mean prediction.

\subsection{Architecture and Loss Function}
In this section, we investigate the effect of different learning architecture and objective functions. The different configurations used for comparison can be found in \autoref{table:model-config}. For GP-based methods, we investigate the use of different kernel and mean function choices. `SE' refers to the use of a Squared Exponential Kernel directly on original data. `DSE' and `DL' correspond to the use of a Squared Exponential or Linear Kernel on a latent representation of raw data transformed by a deep neural network. An independent neural network is used for mean function prediction for `IN' models while the mean function in `SN' cases shares the same latent representation with its kernel.
\newpage

\begin{table}[t]
  \centering
  \begin{tabular}{lm{4.2cm}m{4.0cm}m{1.7cm}}
    \toprule
    Abbrev.     & Architecture    & Loss  & Reference\\
    \midrule
  	GPR-SE-IN  & GPR w/ SE kernel, independent deep mean function & \autoref{eqn:prior-f}: prior likelihood w/ full covariance  & \\ \hline
  	GPR-DSE-IN & GPR w/ deep SE kernel, independent deep mean function & \autoref{eqn:prior-f}: prior likelihood w/ full covariance & \cite{pacoh} (MAP)\\ \hline
  	GPR-DL-IN  & GPR w/ deep linear kernel, independent deep mean function & \autoref{eqn:prior-f}: prior likelihood w/ full covariance & \\ \hline
  	GPR-DL-SN  & GPR w/ deep linear kernel, shared deep mean function & \autoref{eqn:prior-f}: prior likelihood w/ full covariance & \\ \hline
  	BLR-PR-FC  & BLR & \autoref{eqn:prior-f}: prior likelihood w/ full covariance & \\ \hline
  	BLR-PR-DC  & BLR & \autoref{eqn:prior-nf}: prior likelihood w/ diagonal covariance & \\ \hline
  	BLR-POO-D/FC & BLR & \autoref{eqn:post-one}: posterior likelihood on $(x_{t+1}, y_{t+1})$ & \cite{alpaca} \\ \hline
  	BLR-POM-FC & BLR & \autoref{eqn:post-f}: posterior likelihood w/ full covariance on $(x_{t+1}, y_{t+1}, \dots ,x_{T}, y_{T})$ & \\ \hline
  	BLR-POM-DC & BLR & \autoref{eqn:post-nf}: posterior likelihood w/ diagonal covariance on $(x_{t+1}, y_{t+1}, \dots ,x_{T}, y_{T})$ & ALPaCA Code\\ \hline
    \bottomrule
  \end{tabular}
    \caption{Model Abbreviations and configuration details.}
    \label{table:model-config}
\end{table}

While the architecture choice for BLR is limited (equivalent to GPR-DL-SN as shown in \autoref{prop:arch-equivalence}), the equivalence between prior and posterior loss function (\autoref{prop:loss-equvilance}) is examined empirically. 

Note in the original ALPaCA paper, it is stated  that the objective (posterior log likelihood) is evaluated on the next available data point (\autoref{eqn:alpaca-cost}): 
\begin{equation}
\label{eqn:post-one}
	\mathbb{E}_{t\sim p(t), \theta^*\sim p(\theta)}\{\mathbb{E}_{x_{t+1}, y_{t+1}, D_t^* \sim p(x_{t+1}, y_{t+1}, D_t^* | \theta^*) } \log q_\xi(y_{t+1}|x_{t+1}, D_t^*)\}
\end{equation}
while the paper's accompanying source code implements it as:
\begin{equation}
\label{eqn:post-nf}
\mathbb{E}_{t\sim p(t), \theta^*\sim p(\theta)}\{\mathbb{E}_{\tau\sim \mathcal{U}(t+1, T)}[\mathbb{E}_{x_\tau, y_\tau, D_t^* \sim p(x_{t+1}, y_{t+1}, D_t^* | \theta^*) } \log q_\xi(y_{\tau}|x_{\tau}, D_t^*)]\}
\end{equation}
We propose a similar version which also takes into account the covariance between data points as follows:
\begin{equation}
\label{eqn:post-f}
	\mathbb{E}_{t\sim p(t), \theta^*\sim p(\theta)}\{\mathbb{E}_{X, Y, D_t^* \sim p(x_{t+1, \dots, T}, y_{t+1, \dots, T}, D_t^* | \theta^*) } \log q_\xi(Y|X, D_t^*)\}
\end{equation}
Similarly, for the prior likelihood cost (\autoref{eqn:mll-cost}) evaluated on a batch of data $D^*$:
\begin{equation}
\label{eqn:prior-f}
	\mathbb{E}_{\theta^*\sim p(\theta)}[\mathbb{E}_{D_T^*\sim p(D^*|\theta^*)} \log l_\xi(D_T^*)]
\end{equation}
we can derive the version evaluated on a single sample averaged over the dataset as follows:
\begin{equation}
\label{eqn:prior-nf}
	\mathbb{E}_{\theta^*\sim p(\theta)}\{\mathbb{E}_{\tau\sim \mathcal{U}(0, T)}[\mathbb{E}_{x_\tau, y_\tau \sim p(x_\tau, y_\tau|\theta^*)} \log l_\xi(x_\tau, y_\tau)]\}
\end{equation}

Starting from the model in PACOH-MAP(\cite{pacoh}), by changing one element at a time, we generated a total of 8 models with different architecture and loss function combination before we arrive at the standard ALPaCA (\cite{alpaca}) implementation. The configurations of these models are summarized in \autoref{table:model-config}. The performance of these methods across datasets are presented in \autoref{table:result} and \autoref{table:result-rmse}.
\newpage

\begin{table}[!ht]
\small
  \centering
  \begin{tabular}{l|>{\centering\arraybackslash}m{1.8cm}>{\centering\arraybackslash}m{1.8cm}>{\centering\arraybackslash}m{1.8cm}>{\centering\arraybackslash}m{1.8cm}}
    \toprule
   \multicolumn{1}{c}{Method} & \multicolumn{1}{c}{Sinusoid-Easy} & \multicolumn{1}{c}{Sinusoid-Hard} & \multicolumn{1}{c}{Cauchy} &\multicolumn{1}{c}{Swissfel} \\
    \midrule
    GPR-SE-IN	&0.313 &-0.112 & 0.394 & -0.447	\\
    GPR-DSE-IN	&0.596 &-0.348 & 0.185 & 0.763  \\
    GPR-DL-IN	&0.122 &-0.768 &	-0.015 &-1.228 \\
    GPR-DL-SN	&0.141 &-0.793 &0.016 &-0.645\\
    BLR-PR-FC	&-0.203 &-0.362	&0.011	&-0.826	\\
    BLR-PR-DC	&-1.210	 &-1.707	&-0.308	&-1.768	\\
    BLR-POO-D/FC&-0.450	&-0.256	&0.044	&-0.979	\\
    BLR-POM-FC	&-0.373	&-0.379	&-0.038	&-1.892	\\
    BLR-POM-DC	&-0.587	&-0.401	&-0.193	&-1.406\\
    \bottomrule
  \end{tabular}
  \caption{Comparison of Log Likelihood of GPR vs BLR meta-learning methods with different configurations across different datasets.}
  \label{table:result}
\end{table}

\begin{table}[h]
\small
  \centering
  \begin{tabular}{l|cc|cc|cc|cc}
    \toprule
    \multicolumn{1}{c}{}&\multicolumn{2}{c}{Sinusoid-Easy} & \multicolumn{2}{c}{Sinusoid-Hard} & \multicolumn{2}{c}{Cauchy} &\multicolumn{2}{c}{Swissfel}       \\
    \cmidrule(r){2-3} \cmidrule(r){4-5} \cmidrule(r){6-7} \cmidrule(r){8-9}
    Method & RMSE & Calib. & RMSE & Calib. & RMSE & Calib. & RMSE & Calib. \\
    \midrule
    GPR-SE-IN	&0.315 & 0.120 &0.644 &0.108 &0.200	&0.060	&0.368	&0.086 \\
    GPR-DSE-IN	&0.287 & 0.124 & 0.614 &0.104 &0.217&0.069	&0.443	&0.057 \\
    GPR-DL-IN	&0.248 & 0.130 &0.637	&0.105 &0.239 &0.074 &0.663	&0.076 \\
    GPR-DL-SN	&0.218 & 0.142 &0.644	&0.109 &0.230 &0.076 &0.459	&0.054\\
    BLR-PR-FC	&0.340	& 0.118 &0.591	&0.097	&0.225	&0.078	&0.479	&0.074\\
    BLR-PR-DC	&0.748	& 0.173 &0.878	&0.147	&0.237	&0.112	&0.641	&0.146\\
    BLR-POO-D/FC&0.438	&0.111	&0.643	&0.100	&0.231	&0.075	&0.630	&0.078\\
    BLR-POM-FC	&0.404	&0.116	&0.627	&0.104	&0.246	&0.080	&0.828	&0.139\\
    BLR-POM-DC	&0.481	&0.132	&0.725	&0.104	&0.234	&0.102	&0.967	&0.143\\
    \bottomrule
  \end{tabular}
  \caption{Comparison of RMSE and Calibration Error GPR vs BLR meta-learning methods with different configurations across different datasets.}
  \label{table:result-rmse}
\end{table}

\textbf{Architecture Equivalence} As discussed in \autoref{prop:arch-equivalence}, subjected to the same loss function, GPR-DL-SN and BLR-PR-FC in theory should yield the same posterior predictive distribution. Although GPR-DL-SN achieves slightly better performance in Sinusoid-Easy, we observe comparable results on other three datasets. The disparity in performance is most likely due to different optimization outcomes rather than representation power difference. Further benchmark on more dataset is required to determine whether one architecture has the advantage over the other from an optimization point of view.

\textbf{Prior and Posterior Loss Equivalence} It is argued in \autoref{prop:loss-equvilance} that PR-FC is mathematically equivalent to POO-D/FC. Across all four datasets, we observe a small advantage in RMSE of BLR-PR-FC over BLR-POO-D/FC. BLR-PR-FC and BLR-POO-D/FC are also consistently the best two performed models among the five loss choices in terms of log likelihood.

\textbf{Full vs Diagonal Covariance Loss} A significant drop in log likelihood is noticed after switching from PR-FC/POM-FC to PR-DC/POM-DC across all dataset, with the exception of Swissfel where both POM-FC and POM-DC fails. This suggests that capturing the correlation between data points in the loss function is beneficial to ensure good generalization, which is an important aspect of prior learning. 

\textbf{Posterior on One vs All Samples} BLR-POO-D/FC and BLR-POM-FC performs similarly except for Swissfel dataset. For i.i.d data with $p(x_t, y_t)=p(x_{t'}, y_{t'})$, \autoref{eqn:post-one} and \autoref{eqn:post-f} evaluates to the same quantity. Since the data in Swissfel corresponds to real data collected during a a calibration process which uses Bayesian optimization (\cite{swissfel-bayesian}), the non-i.i.d property of the training data may have attributed to this drop. 

\textbf{Shared vs Independent Latent Representation between Linear Kernel and Mean Function} In theory, having a separate neural network for mean function prediction should be more expressive and leads to better performance than the case with shared representation for both kernel feature space and mean function. However, sharing the same latent space might force it to learn a better representation which is meaningful both in terms of target variable and correlation between samples. Having a separate mean deep network did not yield improvement in Sinusoid Dataset, Cauchy Dataset and has led to poorer performance in Swissfel dataset. The benefit of having an independent mean module needs to be analyzed on a per-case basis.

\subsection{Approximation of Mercer Kernel with Neural Networks}
In \autoref{sec:kernel-equivalence}, we argue that any Mercer Kernel can be approximated by a linear kernel operating on a finite-dimensional latent space. To study the influence of network size of this latent representation on the quality of covariance prediction, we experiment BLR-PR-FC with different network sizes on Sinusoid-Hard dataset. 

\begin{figure}
	\centering
    \includegraphics[width=\textwidth]{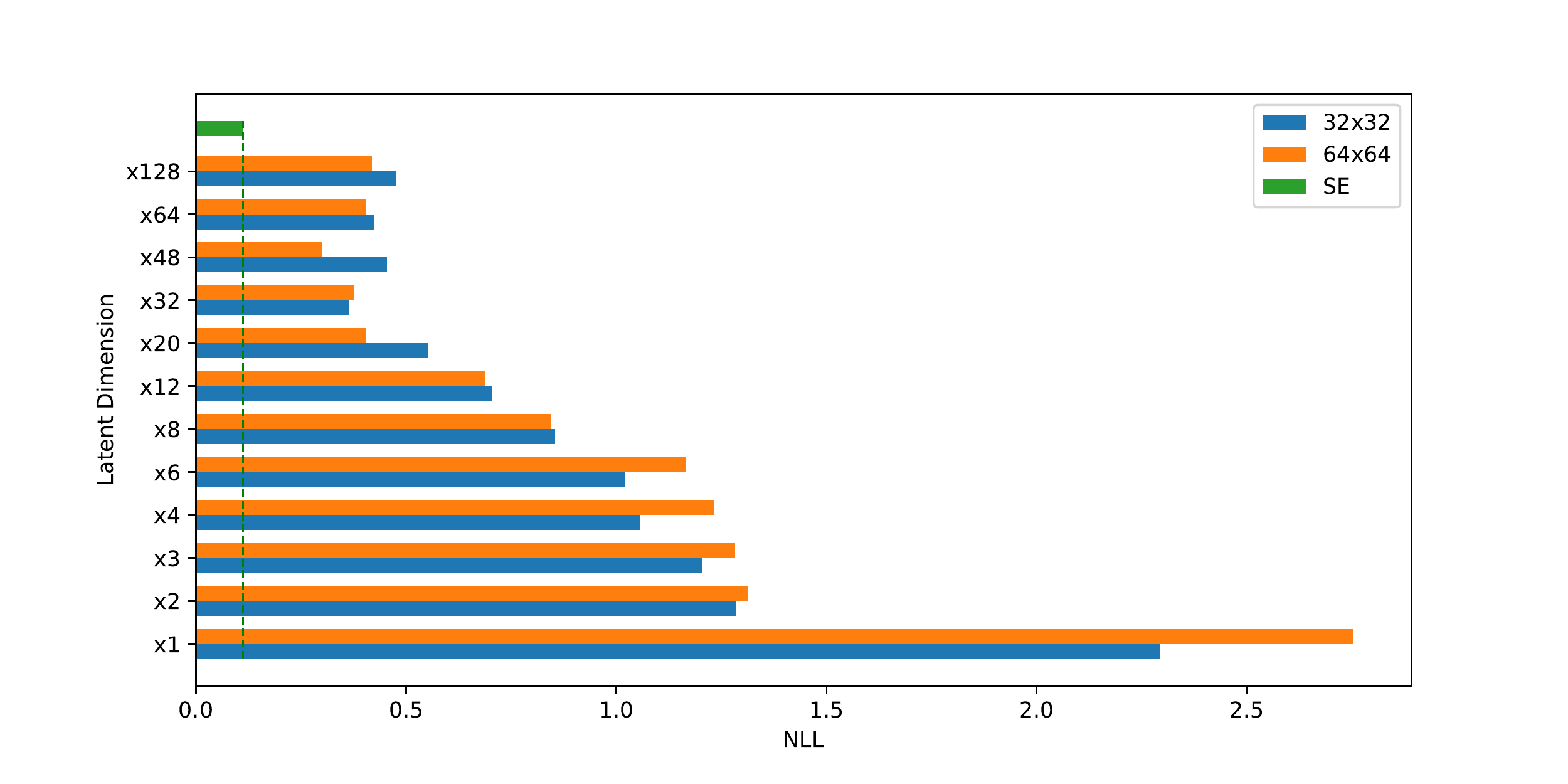}
    \caption{NLL(Negative Log Likelihood) on Sinusoid-Hard test set by BLR-PR-FC with varying latent space $\phi(x)$ size. As an example, the bottom orange bar corresponds to a $\phi(x)$ parameterized by a 64x64x1 network.}
    \label{fig:nn-size}
\end{figure}

As shown in \autoref{fig:nn-size}, while widening the feature dimension of the latent space, there is an initial trend in performance increase until the dimension reaches 32, after which the performance plateaus. We can observe a shift of bottleneck from the representation power of latent representation to the insufficient data available for further exploitation. The initial improvement is also reflected in the posterior predictive distribution shown in \autoref{fig:alpaca-p} and \autoref{fig:alpaca-g}.

Empirically, it is apparent this learned latent space representation is not as effective as a common kernel such as SE kernel. Among all candidates, GPR-SE-IN and GPR-DSE-IN are the best performing models across all dataset with significant margins compared to GPR-DL-IN as shown in \autoref{table:result}.

In conclusion, it is helpful to adopt a hand-crafted kernel, which already measures meaningful similarity, either on raw data or transformed latent space, if necessary. It is highly unlikely, with the limited amount of training data, that a latent feature representation of a powerful kernel could be directly learned.

\begin{figure}[!htb]
\centering
  \begin{minipage}{0.47\textwidth}
  	\centering
    \includegraphics[width=\textwidth]{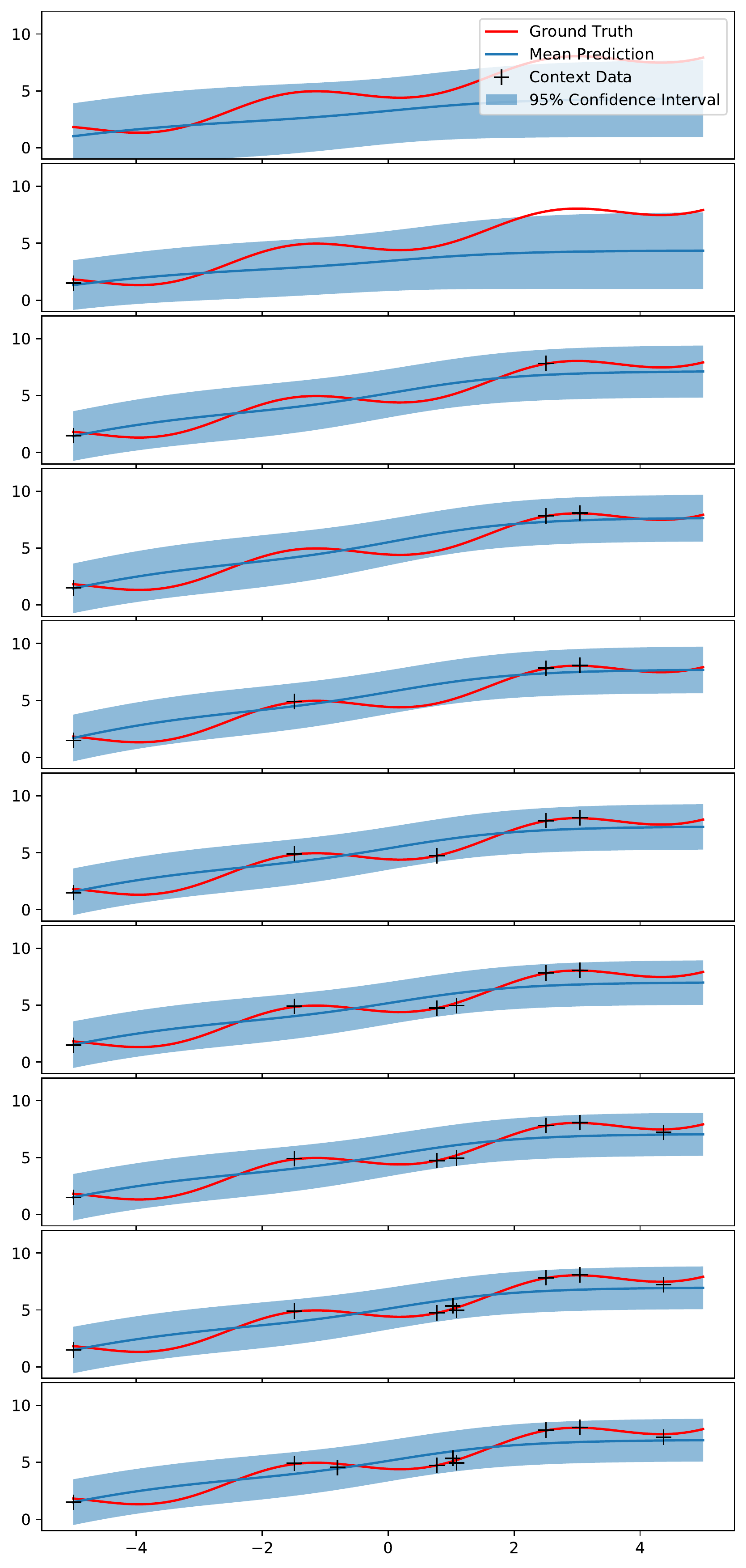}
    \caption{Posterior Prediction by BLR-PR-FC with $\phi(x)$ of size 32x32x3 on Sinusoid-Hard test set.}
    \label{fig:alpaca-p}
  \end{minipage}
  \begin{minipage}{0.47\textwidth}
  	\centering
    \includegraphics[width=\textwidth]{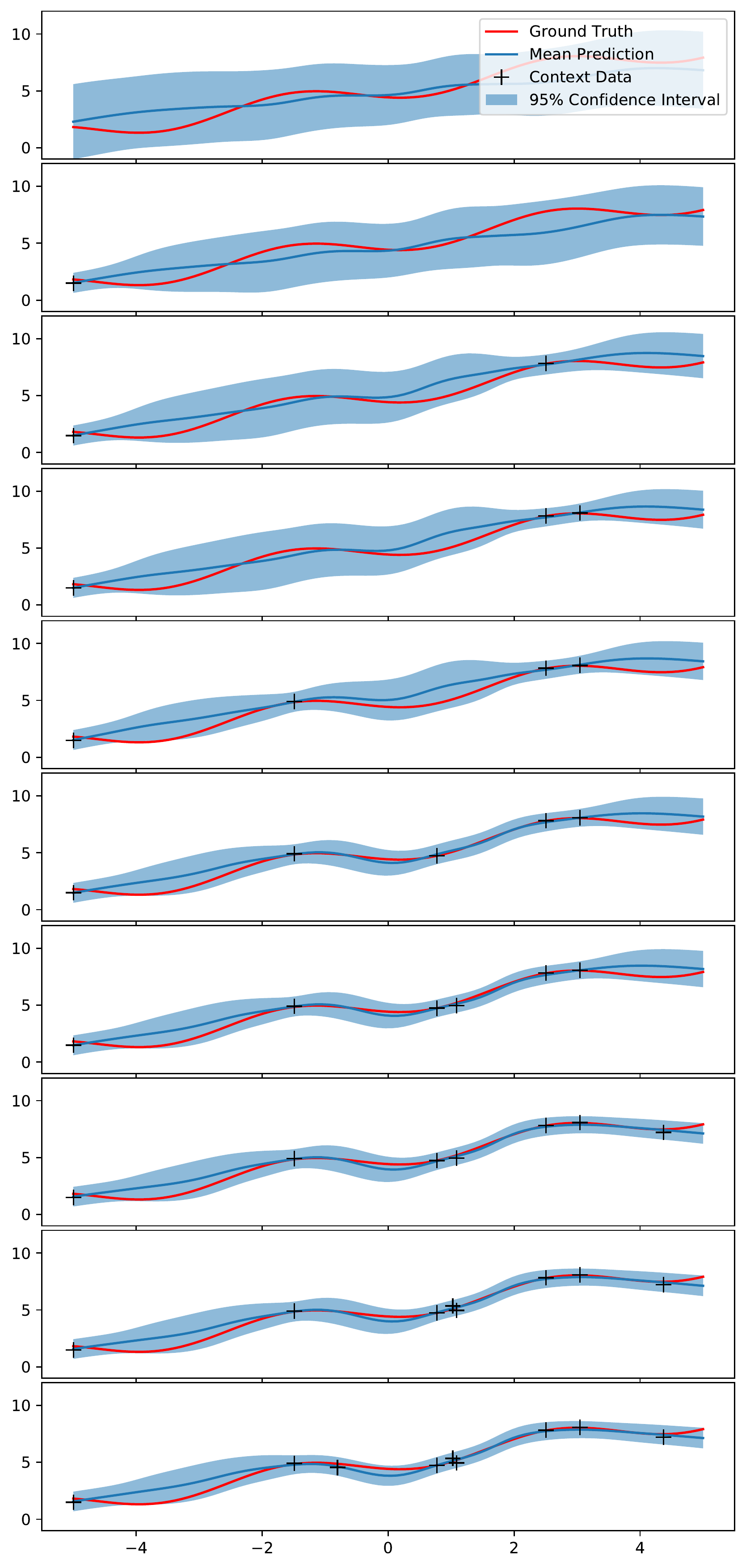}
    \caption{Posterior Prediction by BLR-PR-FC with $\phi(x)$ of size 32x32x32 on Sinusoid-Hard test set.}
    \label{fig:alpaca-g}
  \end{minipage}
\end{figure}

\newpage

\section{Conclusion}
In this report, we compared ALPaCA vs. PACOH-MAP, two recently published bayesian meta-learning frameworks and their variations. We showed theoretical equivalence in their model architecture, namely the equivalence between BLR (Bayesian Linear Regression) and GPR (Gaussian Process Regression). We extended this equivalence to the range of GPs with any Mercer kernel. We further showed the objective function used in ALPaCA and PACOH are also equivalent under mild assumptions. Further empirical studies verified this architectural and loss function equivalence. Furthermore, other model derivatives with similar architecture and loss function revealed the importance of considering full covariance when evaluating multiple samples and the benefit of adopting an `already-learned' kernel in boosting performance with limited data.

We hope to continue investigating the comparison in the reinforcement learning setting where both training and test data are inherently non i.i.d. We believe it is also of interest to explore whether kernel approximation could be separated from the meta-learning process in that we first train a neural network to approximate a common kernel and use the fixed network in BLR to achieve fast update without compromise on performance.
\bibliography{cite.bib}
\bibliographystyle{plainnat}

\appendix
\section{Proof of Equivalence on Posterior Distribution of GPR and BLR}
\label{sec:proof}
In this section, we prove that the distribution of \autoref{eqn:alpaca-post} and \autoref{eqn:gpr-post} are the same.

For the Gaussian distribution mean, starting from \autoref{eqn:gpr-post}:
\begin{align}
	\mu_{GPR}=&\Phi(X_t)^\top \mathbf{K_0}+\Phi(X_t)^\top \Lambda_0^{-1}\Phi(X_c)(\Phi(X_c)^\top\Lambda_0^{-1}\Phi(X_c)+\mathbf{I})^{-1}(Y_c-\Phi(X_c)^\top \mathbf{K_0} \\
	=&\Phi(X_t)^\top [\mathbf{K_0}+ \Lambda_0^{-1}\Phi(X_c)(\Phi(X_c)^\top\Lambda_0^{-1}\Phi(X_c)+\mathbf{I})^{-1}(Y_c-\Phi(X_c)^\top \mathbf{K_0})] \\
	=&\Phi(X_t)^\top [\mathbf{I} - \Lambda_0^{-1}\Phi(X_c)(\Phi(X_c)^\top\Lambda_0^{-1}\Phi(X_c)+\mathbf{I})^{-1}\Phi(X_c)^\top]\mathbf{K_0}  \nonumber \\
	\label{eqn:16}
	&+\Phi(X_t)^\top [\Lambda_0^{-1}\Phi(X_c)(\Phi(X_c)^\top\Lambda_0^{-1}\Phi(X_c)+\mathbf{I})^{-1}] Y_c
\end{align}
Note that by applying Woodbury identity \footnote{$(\mathbf{A} + \mathbf{C}\mathbf{B}\mathbf{C}^\top)^{-1}=\mathbf{A}^{-1}+\mathbf{A}^{-1}\mathbf{C}(\mathbf{B}^{-1}+\mathbf{C}^\top\mathbf{A}^{-1}\mathbf{C})^{-1}\mathbf{C}^\top \mathbf{A}^{-1}$} on \autoref{eqn:lambda-tau}:
\begin{align}
	\label{eqn:17}
	\Lambda_\tau^{-1}&=(\Lambda_0+\Phi(X_c)\mathbf{I}\Phi(X_c)^\top)^{-1} \nonumber \\
	&=\Lambda_0^{-1}-\Lambda_0^{-1}\Phi(X_c)(\mathbf{I}+\Phi(X_c)^\top\Lambda_0^{-1}\Phi(X_c))^{-1}\Phi(X_c)^\top\Lambda_0^{-1} \\
	\label{eqn:18}
	\Lambda_\tau^{-1}\Lambda_0&=\mathbf{I}-\Lambda_0^{-1}\Phi(X_c)(\mathbf{I}+\Phi(X_c)^\top\Lambda_0^{-1}\Phi(X_c))^{-1}\Phi(X_c)^\top
\end{align}
In addition,
\begin{align}
	\Lambda_\tau \Lambda_0^{-1}\Phi(X_c)&=(\Phi(X_c) {\Phi(X_c)}^\top+\Lambda_0)\Lambda_0^{-1}\Phi(X_c) \\
	&=\Phi(X_c)\Phi(X_c)^\top\Lambda_0^{-1}\Phi(X_c)+\Phi(X_c) \\
	&=\Phi(X_c)(\mathbf{I}+\Phi(X_c)^\top\Lambda_0^{-1}\Phi(X_c))
\end{align}
Multiply both sides by $\Lambda_\tau^{-1}$,
\begin{align}
	\Lambda_0^{-1}\Phi(X_c)&=\Lambda_\tau^{-1}\Phi(X_c)(\mathbf{I}+\Phi(X_c)^\top\Lambda_0^{-1}\Phi(X_c)) \\
	\label{eqn:23}
	\Lambda_0^{-1}\Phi(X_c)(\mathbf{I}+\Phi(X_c)^\top\Lambda_0^{-1}\Phi(X_c))^{-1}&=\Lambda_\tau^{-1}\Phi(X_c)
\end{align}
Substitute \autoref{eqn:18} and \autoref{eqn:23} into \autoref{eqn:16},
\begin{align}
	\mu_{GPR}&=\Phi(X_t)^\top \Lambda_\tau^{-1}\Lambda_0\mathbf{K_0}+\Phi(X_t)^\top \Lambda_\tau^{-1}\Phi(X_c)Y_c\\
	&=\Phi(X_t)^\top\Lambda_\tau^{-1}(\Lambda_0\mathbf{K_0}+\Phi(X_c)Y_c) \\
	&=\mu_{BLR}
\end{align}

For the covariance matrix, we start from \autoref{eqn:gpr-post}:
\begin{align}
	&\Sigma_{GPR} \nonumber \\
	=&[\Phi(X_t)^\top\Lambda_0^{-1}\Phi(X_t)+\mathbf{I}-\Phi(X_t)^\top\Lambda_0^{-1}\Phi(X_c)(\Phi(X_c)^\top\Lambda_0^{-1}\Phi(X_c)+\mathbf{I})^{-1}(\Phi(X_c)^\top\Lambda_0^{-1}\Phi(X_t))]\nonumber \\
	&\otimes \Sigma_\epsilon \nonumber \\
	=&\{\mathbf{I}+\Phi(X_t)^\top[\Lambda_0^{-1}-\Lambda_0^{-1}\Phi(X_c)(\Phi(X_c)^\top\Lambda_0^{-1}\Phi(X_c)+\mathbf{I})^{-1}\Phi(X_c)^\top\Lambda_0^{-1}]\Phi(X_t)\} \otimes \Sigma_\epsilon
\end{align}
Subsitute \autoref{eqn:17}, we get:
\begin{align}
	\Sigma_{GPR}=(\mathbf{I}+\Phi(X_t)^\top\Lambda_\tau^{-1}\Phi(X_t)) \otimes \Sigma_\epsilon=\Sigma_{BLR}
\end{align}

\section{Proof of Equivalent Posterior Distribution with Linearly Transformed $\mathbf{K}$ Prior Distribution under Latent Bayesian Linear Regression}
\label{sec:proof-blr}
Since both $\Lambda_0$ and $\Lambda_0^{-1}$ are symmetric positive-definite matrices, we can perform Cholesky decomposition on them: $\Lambda_0^{-1}=\mathbf{L^\top} \mathbf{L}$. 

We claim the posterior predictive distribution in \autoref{eqn:alpaca-post} is equivalent after a linear transformation on $\Lambda_0^{\prime -1}=\mathbf{L}^{\prime \top} \Lambda_0^{-1} \mathbf{L'}$, $\phi^{\prime}(x)=\mathbf{L}^{\prime -1}\phi(x)$ and $\mathbf{K_0^\prime}=\mathbf{L}^{\prime\top}\mathbf{K_0}$ : 
\begin{align*}
	p(Y_t|X_t, X_c, Y_c)&=\mathcal{N}(\Phi(X_t)^\top\mathbf{K_\tau}, \Sigma_\tau) \\
						&=\mathcal{N}(\Phi'(X_t)^\top\mathbf{K_\tau^\prime}, \Sigma^\prime_\tau)
\end{align*}

First note,
\begin{align}
	\Phi'(X_t)^\top &=\Phi(X_t)^\top \mathbf{L'^{-\top}} \\
	\Lambda_\tau^{\prime -1} &= (\Phi'(X_c) {\Phi'(X_c)}^\top+\Lambda_0^\prime)^{-1}\\
	&= (\mathbf{L'}^{-1}\Phi(X_c) \Phi(X_c)^\top\mathbf{L'}^{-\top}+\mathbf{L'}^{-1}\Lambda_0\mathbf{L'}^{-\top})^{-1}\\
	&= \big[\mathbf{L'}^{-1}( \Phi(X_c) \Phi(X_c)^\top+\Lambda_0)\mathbf{L'}^{-\top}\big]^{-1}=\big[\mathbf{L'}^{-1}\Lambda_\tau\mathbf{L'}^{-\top}\big]^{-1}\\
	&= \mathbf{L'}^\top \Lambda_\tau^{-1} \mathbf{L'}  \\
	\mathbf{K_\tau^\prime} &= \Lambda_\tau^{\prime -1}(\Phi'(X_c) Y_c + \Lambda_0^\prime\mathbf{K_0^\prime}) \\
	&=\Lambda_\tau^{\prime -1}(\mathbf{L'}^{-1}\Phi(X_c)Y_c+\mathbf{L'}^{-1}\Lambda_0\mathbf{L'}^{-\top}\mathbf{L}^{\prime\top}\mathbf{K_0}) \\
	&=\mathbf{L'}^\top \Lambda_\tau^{-1} \mathbf{L'} \mathbf{L'}^{-1}(\Phi(X_c)Y_c+\Lambda_0\mathbf{K_0}) \\
	&=\mathbf{L'}^\top \Lambda_\tau^{-1}(\Phi(X_c)Y_c+\Lambda_0\mathbf{K_0}) = \mathbf{L'}^\top \mathbf{K_\tau} 
\end{align}
Apply the above for the distribution mean and covariance:
\begin{align}
	\Phi'(X_t)^\top\mathbf{K_\tau^\prime}&=\Phi(X_t)^\top \mathbf{L'^{-\top}}\mathbf{L'}^\top \mathbf{K_\tau}=\Phi(X_t)^\top \mathbf{K_\tau} \\
	\Sigma^\prime_\tau &=(\mathbf{I}+\Phi'(X_t)^\top \Lambda_\tau^{\prime -1} \Phi'(X_t))\otimes \Sigma_\epsilon \\
	&=(\mathbf{I}+\Phi(X_t)^\top \mathbf{L'}^{-\top} \mathbf{L'}^\top \Lambda_\tau^{-1} \mathbf{L'}  \mathbf{L}^{\prime -1} \Phi(X_t))\otimes \Sigma_\epsilon \\
	&=(\mathbf{I}+\Phi(X_t)^\top \Lambda_\tau^{-1} \Phi(X_t))\otimes \Sigma_\epsilon=\Sigma_\tau
\end{align}

\section{Dataset Details}
The following hyper-parameters are used for generating and evaluation of the datasets:
\begin{table}[h!]
  \centering
  \begin{tabular}{lccccc}
    \toprule
    &\multicolumn{2}{c}{Training Set} &\multicolumn{3}{c}{Test Set}  \\
    \cmidrule(r){2-3} \cmidrule(r){4-6}
    Dataset     & \# Training tasks     & \# Samples & \# Test tasks & \# Context samples &  \# Test samples\\
    \midrule
    Sinusoid-Easy & 20  & 5 & 100 & 5 & 100     \\
    Sinusoid-Hard & 20  & 10 & 100 & 10 & 100  \\
    Cauchy     & 20 & 20 & 1000 & 20 & 100  \\
    Swissfel   & 5 & 200 & 4 & 200 & 200  \\
    \bottomrule
  \end{tabular}
    \caption{Hyper-parameters used in synthetic dataset generation and evaluation. Number of samples are per-task.}
    \label{table:dataset-hyper}
\end{table}

\subsection{Sinusoid-Easy and Sinusoid-Hard}
Both dataset are generated by uniformly sampling the family of sinusoid functions: $y(x)=kx+A\sin[\omega (x-b)]+c+\epsilon$ from $X=[-5, 5]$. The probability distribution of $k, A, \omega, b, c, \epsilon$ are as follows:

\begin{table}[h!]
  \centering
  \begin{tabular}{ccc}
    \toprule
    Parameter     & Sinusoid-Easy    &  Sinusoid-Hard \\
    \midrule
    $k$ & $\mathcal{N}(0.5, 0.2^2)$  &  $\mathcal{N}(0.5, 0.6^2)$ \\
    $A$ & $\mathcal{U}(0.7, 1.3)$  & $\mathcal{U}(0.7, 1.4)$  \\
    $\omega$ & 1.5  & $\mathcal{U}(1.0, 2.0)$   \\
    $b$ & $\mathcal{N}(0.1, 0.1^2)$  & $\mathcal{N}(0.0, 2.0^2)$  \\
    $c$ & $\mathcal{N}(5.0, 0.1^2)$ & $\mathcal{N}(5.0, 0.8^2)$ \\
    $\epsilon$ & $\mathcal{N}(0.0, 0.1^2)$ & $\mathcal{N}(0.0, 0.2^2)$\\
    \bottomrule
  \end{tabular}
    \caption{Sinusoid synthetic datasets configuration parameters.}
    \label{table:sinusoid-hyper}
\end{table}

\begin{figure}[!htb]
\centering
  \begin{minipage}{0.45\textwidth}
  	\centering
    \includegraphics[width=\textwidth]{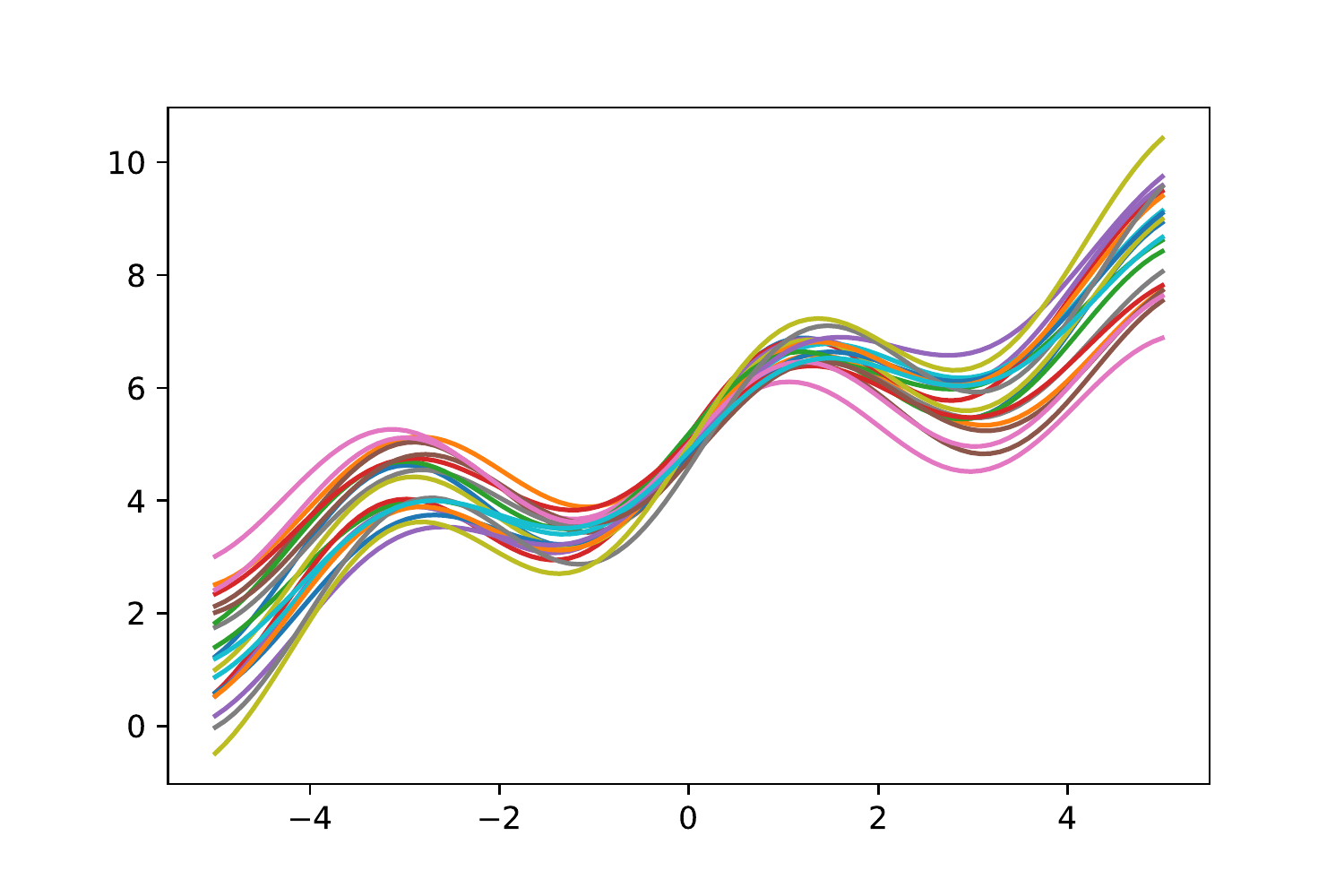}
    \caption{20 test functions drawn from Sinusoid-Easy test set.}
    \label{fig:1}
  \end{minipage}
  \begin{minipage}{0.45\textwidth}
  	\centering
    \includegraphics[width=\textwidth]{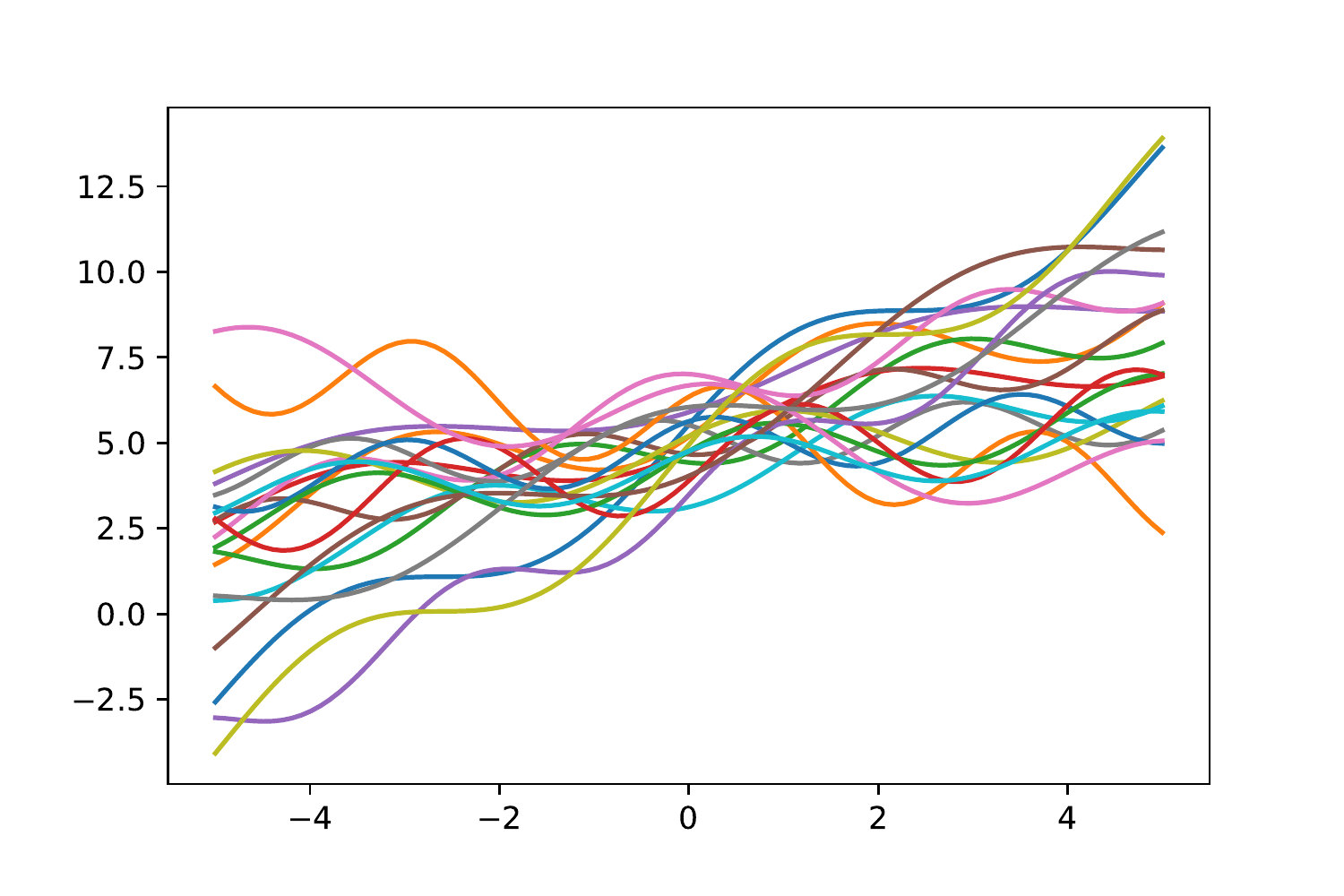}
    \caption{20 test functions drawn from Sinusoid-Hard test set.}
    \label{fig:2}
  \end{minipage}
\end{figure}


\subsection{Cauchy \& Swissfel}
Please refer to Appendix C.1.2. and C.1.3. in \cite{pacoh} for more information.
\end{document}